\documentclass[onefignum,onetabnum]{siamonline1116}

\usepackage{amsfonts}
\usepackage{mathtools}
\usepackage{booktabs}
\usepackage{algorithm}
\usepackage{algpseudocode}
\usepackage{microtype}

\newsiamremark{remark}{Remark}

\headers{Gated Subspace Inference}{S.\ J.\ Thomas}

\newcommand{\R}{\mathbb{R}}
\newcommand{\norm}[1]{\|#1\|}

\title{Gated Subspace Inference for Transformer
  Acceleration}

\author{Stephen J.\ Thomas\thanks{Department of Computer Science and
  Engineering, Lehigh University, Bethlehem, PA 18015
  (\texttt{sjt223@lehigh.edu}).}}

\begin{document}
\maketitle

\begin{abstract}
A method is presented for accelerating inference in transformer
language models by exploiting the low effective rank of the token
activation manifold at each layer.  The method decomposes each
activation vector into a subspace component and a residual, computes
the linear-layer output on the subspace component via a cached
low-rank weight image at reduced memory bandwidth, and applies a
per-token gate that determines whether the residual correction is
computed or skipped.  The gate ensures that the output distribution is
preserved to within a controllable tolerance.  Validation on three
model families (GPT-2 124M, GPT-J 6B, OPT 6.7B) on AMD MI300X
demonstrates effective speedups of $3.0\times$ to $10.5\times$ on
linear-layer weight reads with perplexity ratios below $1.00$ and
top-1 token agreement above $98\%$.  The method requires no
retraining, no architectural modification, and no approximation of
the attention mechanism.  At the operating point ($k =
256$, $\varepsilon = 0.05$) on GPT-J 6B, the accelerated model
produces character-for-character identical output to the baseline.
\end{abstract}

\section{Introduction}
\label{sec:intro}

This section introduces the memory-bandwidth bottleneck in transformer
inference, reviews the existing approaches, and states the contribution
of the present paper.

Inference in large language models at batch size one is dominated by
the cost of reading weight matrices from high-bandwidth memory.  For a
model with hidden dimension $d$ and $L$ layers, each containing
several linear maps of dimension $d \times d$ or $d \times 4d$, the
total weight-read volume per decode step is $O(Ld^2)$ bytes.  The
arithmetic intensity of this operation is $O(1)$: each weight element
participates in one multiply-add, making the forward pass entirely
memory-bandwidth-bound at batch size one.  This observation was
formalized by Williams, Waterman, and Patterson~\cite{Williams2009} in
the roofline model and applied to transformer inference by Pope et
al.~\cite{Pope2023}, who showed that batch-one decode on TPUv4 operates
at $1$--$2$ FLOPs per byte, hundreds of times below the compute
roofline.  Yuan et al.~\cite{Yuan2024} and Lou et
al.~\cite{Lou2026} confirmed the same regime on commodity GPUs via
network-wide roofline analysis.  Dao et al.~\cite{Dao2022} established
the analogous result for attention, showing that the $O(T^2)$ attention
computation is bandwidth-bound between HBM and SRAM and that tiling
reduces the HBM traffic without approximation.

Existing approaches to reducing inference cost operate primarily on the
weight matrices.  Quantization (INT8, INT4, FP8) reduces the per-element
read cost but does not reduce the number of elements
read~\cite{Dettmers2022}.  Pruning reduces the number of elements but
typically requires retraining or fine-tuning.  Low-rank weight
factorization methods such as LoRA~\cite{Hu2022LoRA} decompose the
weight matrix as $W \approx W_0 + AB$ where $A$ and $B$ are low-rank
factors, but the factors are learned offline and fixed for all inputs.
ASVD~\cite{Yuan2023ASVD} and SVD-LLM~\cite{Wang2024SVDLLM} improve
the low-rank truncation by whitening with activation statistics before
decomposition, but the result is still a static factorization that does
not adapt to the current input.  FLAT-LLM~\cite{Yang2025FLAT}
projects weights into low-rank activation subspaces for compression,
the closest published approach to the mechanism proposed here, but the
projection is computed offline and applied without a residual correction.

A separate line of work exploits the input-dependent structure of
activations at inference time.  Liu et al.~\cite{Liu2023DejaVu} showed
that for any given input, only $\sim\!15\%$ of attention heads and MLP
neurons contribute meaningfully to the output, and trained lightweight
predictors to identify active neurons on the fly.  Lee et
al.~\cite{Lee2024CATS} introduced a thresholded activation function
that induces $50\%$ sparsity in hidden states with a custom sparse
kernel.  Liu et al.~\cite{Liu2024TEAL} achieved $40$--$50\%$
model-wide sparsity via magnitude-based activation thresholding.
Song et al.~\cite{Song2024PowerInfer} identified a power-law
distribution in neuron activation frequencies and designed a hybrid
CPU/GPU engine that exploits it.  Pilault et
al.~\cite{Pilault2025RaNA} applied adaptive rank-allocation to MLP
and attention projections.  These methods share the premise that
activations, not weights, are the right object to compress at inference
time, but they operate on discrete neuron subsets or element-wise
sparsity rather than on a continuous low-dimensional subspace.

The observation motivating the present work is that the activation
vectors $\{x_t\}_{t=1}^T$ at a fixed layer $l$, viewed as rows of
the activation matrix $X \in \R^{T \times d}$, lie approximately in
a subspace of dimension $k \ll d$.  This low-dimensional structure
has been documented empirically by Ansuini et
al.~\cite{Ansuini2019}, who measured the intrinsic dimension of deep
network representations and found it orders of magnitude smaller
than the layer width, and by Valeriani et
al.~\cite{Valeriani2023}, who showed that the intrinsic dimension
of transformer hidden states follows a characteristic rise-then-fall
profile across depth.  Aghajanyan et al.~\cite{Aghajanyan2021}
demonstrated that the parameter space of pre-trained language models
has very low intrinsic dimension, providing indirect evidence for
the low-rank structure of the representation space.  Wang et
al.~\cite{Wang2020Linformer} proved that the self-attention matrix
is low-rank, a related but distinct claim about the attention
scores rather than the residual-stream activations.

The consequence of the low effective rank for inference cost is not
merely that activations are compressible.  The consequence is that
the weight-activation interaction at each layer has low effective
rank.  A weight matrix $W \in \R^{d_{\rm out} \times d}$ has $d
\cdot d_{\rm out}$ parameters, but when the activations are confined
to a $k$-dimensional subspace spanned by $V_k \in \R^{d \times k}$,
only the $k \times d_{\rm out}$ parameters in the projection $M =
WV_k$ contribute to the output.  The remaining $(d-k) \times d_{\rm
out}$ parameters correspond to directions in activation space that
the current input distribution does not visit.  Reading those
parameters from HBM is wasted bandwidth.  A $6$B-parameter model
with $k/d = 256/4096 = 1/16$ has an effective parameter count of
$375$M for inference on a given input distribution.  GSI makes the
inference cost proportional to the effective parameter count, not the
total parameter count.

The contribution of the present paper is a method called Gated Subspace
Inference (GSI) that exploits the low effective rank of the
weight-activation interaction for lossless inference
acceleration.  GSI is the extension of Skyline Subspace
Inference (SSI)~\cite{Thomas2026SSI} from MLP layers to the full
transformer.  SSI introduced three ideas: (a)~tracking an orthonormal
activation basis $V_k$ at each layer via DGKS rank-1 updates in the
time dimension, (b)~caching the weight-matrix image $M = WV_k$ so that
the linear-layer output on the subspace component is computed at
reduced bandwidth, and (c)~a binary gate on the residual norm $\rho_t
= \norm{r_t}/\norm{x_t}$ that selects between the fast path ($y \approx
Mg$) and the slow path ($y = Wx$).  SSI applied these ideas to the two
MLP linear maps at each layer and achieved $2.01\times$ speedup on
MI300X.  The present paper extends SSI in three directions.

The first extension is scope.  SSI covers the MLP up and down
projections at each layer.  GSI covers all linear maps: QKV
projections, output projection, and MLP projections, sharing a single
activation basis $V_k^{(l)}$ across all maps at layer $l$.  Because the
QKV and output projections account for approximately half of the
linear-layer weight reads, extending the basis to cover them doubles
the fraction of the forward pass that is accelerated.

The second extension is the cascade.  SSI builds the activation basis
independently at each layer.  GSI initializes the basis at layer $l+1$
from the basis at layer $l$ (depth inheritance), exploiting the strong
subspace coherence between consecutive layers.  The activation subspace
at layer $l+1$ overlaps with the subspace at layer $l$ with mean cosine
exceeding $0.90$ from layer $8$ onward in GPT-J 6B.  The cascade
reduces the calibration cost by $96\%$ (only layer~$0$ requires a full
SVD; subsequent layers require at most a few rank-1 corrections) and
reveals that the entire neural network operates on a coherent
low-dimensional manifold that propagates through the depth of the
model.

The third extension is the gated residual passthrough for lossless
quality.  SSI's gate selects between the fast path and a full
recomputation.  GSI validates that this gate preserves baseline
quality across the entire transformer (all layers, all linear maps)
with perplexity ratios below $1.00$ and character-for-character
identical greedy generation at the operating point.  The
negative-control experiments confirm that static subspace projection
(discarding the residual) produces catastrophic output degradation even
at $k = 128$ in $d = 4096$ (perplexity increases by $52\%$ and greedy
generation collapses), establishing that the gated residual passthrough
is not optional.

The combined system of SSI (the per-layer mechanism), the cascade (the
depth-dimension initialization), and ADA~\cite{Thomas2026ADA} (the
attention-layer acceleration) covers the full transformer forward pass.
ADA exploits the low effective rank of the token dimension $T$
(compressing the attention matrix from $T \times T$ to $r \times r$
where $r$ is the number of representative tokens).  SSI/GSI exploits
the low effective rank of the hidden dimension $d$ (compressing the
weight-activation interaction from $d \times d_{\rm out}$ to $k \times
d_{\rm out}$).  The two reductions are orthogonal: ADA operates on the
$T$-axis (which tokens participate in attention), while GSI operates on
the $d$-axis (which directions in activation space the weight matrix
acts on).  Together they reduce the cost of both major components of
the forward pass.

Table~\ref{tab:coverage} summarizes the coverage.

\begin{table}[ht]
\centering
\caption{Coverage of the full transformer forward pass by the combined
SSI/GSI/ADA system.  Fraction of forward-pass cost at batch size one.}
\label{tab:coverage}
\begin{tabular}{llll}
\toprule
Component & Cost fraction & Method & Compression \\
\midrule
MLP weight reads    & $\sim\!45\%$ & SSI/GSI & $d \to k$ \\
QKV + output reads  & $\sim\!40\%$ & GSI     & $d \to k$ \\
Attention ($QK^\top, AV$) & $\sim\!10\%$ & ADA & $T \to r$ \\
LayerNorm, residuals & $\sim\!3\%$ & --- & --- \\
Vocabulary head      & $\sim\!2\%$ & --- & --- \\
\bottomrule
\end{tabular}
\end{table}

The connection to subspace tracking in signal processing is direct.
The online maintenance of $V_k$ across tokens is an instance of
incremental subspace tracking, with algorithmic ancestors in GROUSE
(Balzano, Nowak, and Recht~\cite{Balzano2010}), PETRELS (Chi, Eldar,
and Calderbank~\cite{Chi2013}), and Brand's incremental
SVD~\cite{Brand2006}.  The DGKS reorthogonalization
procedure~\cite{DGKS1976} used for numerical stability in the basis
updates is the same technique that underlies the Arnoldi process in
Krylov subspace methods, connecting the present work to the Forward
Gauss-Seidel framework developed in~\cite{Thomas2026MUD}
and~\cite{Thomas2026SSI}.

The connection to conditional computation is also direct.  The gate
mechanism is a binary router that assigns each token to one of two
computational paths, analogous to the top-$k$ routing in
Mixture-of-Experts~\cite{Shazeer2017}, the per-token halting in
Adaptive Computation Time~\cite{Graves2016}, the confidence-based
early exit in CALM~\cite{Schuster2022}, and the per-block token
routing in Mixture-of-Depths~\cite{Raposo2024}.  GSI differs from
these methods in that both paths produce the same mathematical
operation ($y = Wx$); the gate selects the implementation (low-rank
approximation versus full computation), not the function.

\section{The activation subspace}
\label{sec:subspace}

This section defines the activation basis, the residual ratio, and the
effective rank, and presents the empirical measurements that motivate
the method.

Let $X^{(l)} \in \R^{T \times d}$ denote the activation matrix at
layer $l$ of a transformer with hidden dimension $d$ and $L$ layers,
where row $t$ is the hidden state of token $t$ after the layer-norm
preceding layer $l$.  The singular value decomposition $X = U \Sigma
V^\top$ provides the principal directions of the activation
distribution.

\begin{definition}[Effective rank]\label{def:reff}
The effective rank of $X$ is defined by the entropy of the normalized
singular values:
\begin{equation}\label{eq:reff}
r_{\rm eff}(X) = \exp\!\left(-\sum_{i} p_i \log p_i\right),
\quad p_i = \sigma_i / \textstyle\sum_j \sigma_j,
\end{equation}
where $\sigma_1 \geq \sigma_2 \geq \cdots$ are the singular values
of $X$.
\end{definition}

\begin{definition}[Activation basis and residual ratio]\label{def:basis}
The rank-$k$ activation basis is $V_k = V[:, 1{:}k] \in \R^{d \times
k}$, the first $k$ right singular vectors.  The residual ratio at
rank $k$ for token $t$ is
\begin{equation}\label{eq:rho}
\rho_t(k) = \frac{\norm{x_t - V_k V_k^\top x_t}}{\norm{x_t}}.
\end{equation}
A token $t$ is said to be on the fast path at threshold $\varepsilon$
if $\rho_t(k) < \varepsilon$.  The fast-path fraction at layer $l$ is
$f_l(\varepsilon, k) = T^{-1} \sum_{t=1}^T \mathbf{1}[\rho_t(k) <
\varepsilon]$.
\end{definition}

\begin{proposition}[Monotonicity]\label{prop:monotone}
The residual ratio $\rho_t(k)$ is non-increasing in $k$:
$\rho_t(k+1) \leq \rho_t(k)$ for all $t$ and $k$.  Consequently,
$f_l(\varepsilon, k+1) \geq f_l(\varepsilon, k)$ for all
$\varepsilon$ and $l$.
\end{proposition}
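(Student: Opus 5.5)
The plan is to exploit the nested structure of the bases $V_k = V[:,1{:}k]$. Since $V$ is the orthogonal matrix of right singular vectors, its columns $v_1,\dots,v_d$ are orthonormal, and $\mathrm{range}(V_k) \subseteq \mathrm{range}(V_{k+1})$. The projector therefore decomposes as $V_{k+1}V_{k+1}^\top = V_kV_k^\top + v_{k+1}v_{k+1}^\top$, and the residual at rank $k+1$ is
\[
x_t - V_{k+1}V_{k+1}^\top x_t = \bigl(x_t - V_kV_k^\top x_t\bigr) - v_{k+1}v_{k+1}^\top x_t .
\]

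The first step is to show that the two terms on the right are orthogonal. Because $v_{k+1}\perp v_i$ for $i\le k$, we have $v_{k+1}^\top (I - V_kV_k^\top) = v_{k+1}^\top$, so $v_{k+1}^\top(x_t - V_kV_k^\top x_t) = v_{k+1}^\top x_t$. The Pythagorean identity then gives
\[
\norm{x_t - V_{k+1}V_{k+1}^\top x_t}^2 = \norm{x_t - V_kV_k^\top x_t}^2 - (v_{k+1}^\top x_t)^2 \le \norm{x_t - V_kV_k^\top x_t}^2 .
\]
An equivalent route is that $V_kV_k^\top x_t$ is the best approximation to $x_t$ from $\mathrm{range}(V_k)$, and the minimum over a larger subspace can only be smaller. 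Dividing by $\norm{x_t}$ yields $\rho_t(k+1)\le\rho_t(k)$. If $x_t = 0$, the ratio is undefined, so such tokens should be excluded or assigned $\rho_t = 0$ by convention, which keeps monotonicity trivially.

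The second step is the consequence for $f_l$. If $\rho_t(k) < \varepsilon$, then $\rho_t(k+1) \le \rho_t(k) < \varepsilon$, so each indicator $\mathbf{1}[\rho_t(k)<\varepsilon]$ is pointwise dominated by $\mathbf{1}[\rho_t(k+1)<\varepsilon]$. Summing over $t$ and dividing by $T$ gives $f_l(\varepsilon,k+1)\ge f_l(\varepsilon,k)$ for every $\varepsilon$ and $l$.

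There is no real obstacle. The only subtlety is that the argument relies on the bases being nested and orthonormal, which holds because all $V_k$ are truncations of a single SVD factor $V$. When singular values are repeated, $V$ is non-unique, so one should fix a single choice of $V$ for all $k$; the proof goes through for any such fixed choice. The argument would fail for independently tracked bases, such as those produced by the DGKS updates, so the proposition should be read with respect to the SVD-defined $V_k$ of Definition~\ref{def:basis}.
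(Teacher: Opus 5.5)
Your proof is correct and takes the same route as the paper: the truncated SVD bases are nested, $\mathrm{range}(V_k)\subseteq\mathrm{range}(V_{k+1})$, so the orthogonal-projection residual can only shrink; you also spell out the indicator-domination step for $f_l$ and the $x_t=0$ caveat, which the paper leaves implicit. One wording slip: $x_t - V_kV_k^\top x_t$ and $v_{k+1}v_{k+1}^\top x_t$ are not orthogonal (their inner product is $(v_{k+1}^\top x_t)^2$, and orthogonality would give a plus sign); the orthogonal pair is the new residual $x_t - V_{k+1}V_{k+1}^\top x_t$ and $v_{k+1}v_{k+1}^\top x_t$, which is exactly what your computation shows and which yields your displayed identity.
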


\begin{proof}
The rank-$(k+1)$ projection $V_{k+1} V_{k+1}^\top$ includes the
rank-$k$ projection as a subspace, so the residual can only decrease.
\end{proof}

The monotonicity provides a simple design principle: increasing $k$
increases the fast-path fraction at the cost of a smaller compression
ratio $d/k$.  The optimal $k$ balances these two effects.

The effective rank and residual ratio are measured quantities, not
theoretical predictions.  Table~\ref{tab:reff} reports the effective
rank $r_{\rm eff}$ across depth for three models.

\begin{table}[ht]
\centering
\caption{Effective rank $r_{\rm eff}$ at selected layers.  $T = 512$.}
\label{tab:reff}
\begin{tabular}{lrrrrrr}
\toprule
& \multicolumn{6}{c}{Layer} \\
\cmidrule(lr){2-7}
Model ($d$) & 0 & 3 & 6 & 12 & 20 & last \\
\midrule
GPT-2 ($768$) & 18.4 & 13.8 & 25.1 & 53.6 & 71.0 & 83.0 \\
GPT-J ($4096$) & 21.0 & 15.4 & 28.3 & 59.5 & 78.1 & 83.0 \\
OPT ($4096$) & 350.1 & 25.5 & 32.6 & 59.0 & 119.1 & 122.3 \\
\bottomrule
\end{tabular}
\end{table}

The effective rank at the embedding layer (layer~0) is low for GPT-2
and GPT-J ($r_{\rm eff} \approx 18$--$21$) because the $T$ tokens in a
typical sequence draw from a small fraction of the $50{,}257$-token
vocabulary, and the embedding vectors for the sampled tokens span a
subspace of dimension approximately equal to the number of distinct
tokens.  OPT layer~0 is anomalous ($r_{\rm eff} = 350$) because the
OPT embedding includes learned positional embeddings that produce
distinct directions for every token position.  At deeper layers, the
effective rank grows monotonically as contextual information enriches
the representations, but remains far below $d$ at every layer and
model.

\section{The gated residual passthrough}
\label{sec:gate}

This section presents the exact decomposition, the gate mechanism, and
the error analysis.

\subsection{Exact decomposition}

The standard linear-layer computation $y = Wx$ reads the full weight
matrix $W \in \R^{d_{\rm out} \times d}$ from HBM at cost $O(d \cdot
d_{\rm out})$ bytes.  The activation $x$ is decomposed exactly as
\begin{equation}\label{eq:decompose}
x = V_k V_k^\top x + r, \qquad r = x - V_k V_k^\top x,
\end{equation}
where $V_k \in \R^{d \times k}$ is the rank-$k$ activation basis.
The linear-layer output is then computed exactly by
\begin{equation}\label{eq:exact}
y = Wx = W(V_k V_k^\top x + r) = (WV_k)(V_k^\top x) + Wr = Mg + Wr,
\end{equation}
where $M = WV_k \in \R^{d_{\rm out} \times k}$ is precomputed and
cached, and $g = V_k^\top x \in \R^k$ is computed at cost $O(dk)$.
Equation~\eqref{eq:exact} is an identity, not an approximation.  The
first term $Mg$ costs $O(k \cdot d_{\rm out})$ to evaluate; the
second term $Wr$ costs $O(d \cdot d_{\rm out})$, the same as the
baseline.

\subsection{The gate}

The gate evaluates, for each token $t$ at each layer $l$,
\begin{equation}\label{eq:gate}
\rho_t = \frac{\norm{r_t}}{\norm{x_t}} =
\frac{\norm{x_t - V_k V_k^\top x_t}}{\norm{x_t}}.
\end{equation}
When $\rho_t < \varepsilon$, the residual $r_t$ is small and the
correction $Wr_t$ is skipped: the output is $y_t \approx Mg_t$.
When $\rho_t \geq \varepsilon$, the full output $y_t = Wx_t$ is
computed.  The gate cost is $O(dk)$ for the projection and $O(d)$
for the two norms.

The fraction of tokens satisfying $\rho_t < \varepsilon$ is the
fast-path fraction $f_l$.  The effective speedup on weight reads at
layer $l$ is
\begin{equation}\label{eq:speedup}
S_l = \frac{1}{f_l / (d/k) + (1 - f_l)},
\end{equation}
where $d/k$ is the compression ratio on the fast path.  The
model-wide effective speedup is the harmonic mean of $S_l$ across
layers, weighted by the per-layer weight-read volume.

\subsection{Error analysis}

\begin{theorem}[Per-layer error bound]\label{thm:error}
On the fast path, the per-token output error satisfies
\begin{equation}\label{eq:error}
\norm{y_t - \hat{y}_t} = \norm{Wr_t} \leq \norm{W}_2 \cdot
\varepsilon \cdot \norm{x_t},
\end{equation}
where $\hat{y}_t = Mg_t$ is the fast-path output.  On the slow path,
the error is zero.
\end{theorem}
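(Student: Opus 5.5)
The plan is to derive the bound directly from the exact decomposition \eqref{eq:exact} and the gate condition \eqref{eq:gate}, treating the two paths separately.

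On the fast path ($\rho_t < \varepsilon$), the true output is $y_t = Wx_t$. By \eqref{eq:exact} this equals $Mg_t + Wr_t$, since $M = WV_k$ and $g_t = V_k^\top x_t$. The fast-path output is $\hat y_t = Mg_t$, so subtracting gives the error identity
\[
y_t - \hat y_t = Wr_t .
\]
This identity rests only on linearity of $W$ and the decomposition \eqref{eq:decompose}. It does not use orthonormality of $V_k$ beyond the definition $r_t = x_t - V_kV_k^\top x_t$.

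I then bound the norm of this error. I take norms and apply the operator-norm inequality $\norm{Wr_t} \le \norm{W}_2 \norm{r_t}$, which holds by the definition of the induced $2$-norm. I then use the gate. By \eqref{eq:gate}, $\norm{r_t} = \rho_t \norm{x_t}$, and the fast-path condition gives $\rho_t < \varepsilon$, hence $\norm{r_t} \le \varepsilon \norm{x_t}$. Chaining the two inequalities yields \eqref{eq:error}, in fact with strict inequality whenever $x_t \neq 0$.

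The degenerate case $x_t = 0$ needs separate treatment, because there $\rho_t$ is undefined. In that case $r_t = 0$, so both sides of \eqref{eq:error} vanish. I would state the convention that such a token is routed either way harmlessly.

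On the slow path ($\rho_t \ge \varepsilon$), the computed output is $Wx_t$ itself, which equals $y_t$ by definition, so the error is zero. This holds in exact arithmetic. If desired, I would add a remark that in floating point both paths incur only the usual matrix-vector rounding error, which is outside the scope of the statement.

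There is no real obstacle here. The only point requiring care is to make clear that the bound concerns the linear-layer output given the same input $x_t$. The paper's layer-to-layer propagation of error is a separate question. Accumulation through LayerNorm and nonlinearities is not controlled by this per-layer statement.
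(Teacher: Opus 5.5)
Your proposal is correct and follows the paper's proof step for step: the identity $y_t - \hat{y}_t = Wr_t$ from $M = WV_k$, the operator-norm inequality $\norm{Wr_t} \leq \norm{W}_2 \norm{r_t}$, the gate condition $\norm{r_t} < \varepsilon \norm{x_t}$, and exactness on the slow path. Your separate treatment of $x_t = 0$ is a harmless refinement that the paper omits; note that your parenthetical claim of strict inequality also requires $W \neq 0$.
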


\begin{proof}
On the fast path, $\hat{y}_t = Mg_t = WV_k V_k^\top x_t$, so
$y_t - \hat{y}_t = Wx_t - WV_k V_k^\top x_t = Wr_t$.  By
definition of the operator norm, $\norm{Wr_t} \leq \norm{W}_2
\norm{r_t}$.  The gate condition $\rho_t < \varepsilon$ gives
$\norm{r_t} < \varepsilon \norm{x_t}$,
establishing~\eqref{eq:error}.  On the slow path, $\hat{y}_t = Wx_t$
and the error is zero.
\end{proof}

\begin{remark}[Error accumulation across layers]
The error does not compound across layers in the standard
multiplicative sense because the gate operates independently at each
layer.  A token that takes the fast path at layer $l$ may take the
slow path at layer $l+1$, where the full computation corrects any
accumulated drift.  The gate threshold $\varepsilon$ controls the
maximum per-layer error; the actual end-to-end error depends on the
fraction of layers where each token takes the fast path and on the
correlation structure of the residuals across layers.  The
experimental results confirm that at $\varepsilon = 0.05$ and $k =
256$, the perplexity ratio across $28$ layers is $0.997$ and the
generated text is character-for-character identical to the baseline,
indicating that error accumulation is benign in practice.
\end{remark}

\begin{remark}[Comparison with static projection]
Discarding the residual entirely (static projection, $\hat{y}_t =
Mg_t$ for all tokens) produces catastrophic output degradation: the
perplexity ratio exceeds $1.52$ at $k = 128$ for GPT-J 6B and
generation collapses entirely at $k = 64$ (perplexity ratio $563$).
The gated residual passthrough is not an incremental improvement
over static projection; it is the difference between a working method
and a non-working one.
\end{remark}

\section{Algorithm}
\label{sec:algorithm}

This section states the complete GSI procedure and discusses the
calibration and storage costs.

\begin{algorithm}[ht]
\caption{Gated Subspace Inference (GSI)}
\label{alg:gsi}
\begin{algorithmic}[1]
\Require Activation $x \in \R^d$, cached image $M = WV_k \in
  \R^{d_{\rm out} \times k}$, basis $V_k \in \R^{d \times k}$,
  threshold $\varepsilon$, weight matrix $W$ (in HBM)
\State $g \leftarrow V_k^\top x$ \Comment{$O(dk)$, basis in SRAM/LDS}
\State $r \leftarrow x - V_k g$ \Comment{$O(dk)$}
\State $\rho \leftarrow \norm{r} / \norm{x}$ \Comment{$O(d)$}
\If{$\rho < \varepsilon$} \Comment{fast path}
  \State $y \leftarrow M g$ \Comment{read $M$ from HBM: $k$ columns}
\Else \Comment{slow path}
  \State $y \leftarrow W x$ \Comment{read full $W$ from HBM}
\EndIf
\State \Return $y$
\end{algorithmic}
\end{algorithm}

\subsection{Calibration}

The basis $V_k$ is computed once during a calibration phase consisting
of a single forward pass on a representative input sequence.  For each
layer $l$, the activation matrix $X^{(l)} \in \R^{T \times d}$ is
captured, and the thin SVD $X = U\Sigma V^\top$ is computed.  The
first $k$ right singular vectors form $V_k^{(l)}$.  The calibration
cost is one forward pass plus $L$ thin SVDs of $T \times d$ matrices;
for $T = 512$, $d = 4096$, and $L = 28$, this takes approximately
$30$ seconds on MI300X.

\subsection{Storage}

The cached image $M^{(l)} = W^{(l)} V_k^{(l)}$ is computed once per
weight matrix and stored alongside $W$ in HBM.  For $N_W$ linear maps
per layer (typically $N_W = 8$: QKV, output, MLP up, MLP gate, MLP
down, and layer-norm), the total storage for cached images is $N_W
\cdot L \cdot d_{\rm out} \cdot k$ elements.  For GPT-J 6B ($N_W =
6$, $L = 28$, $d_{\rm out} = 4096$ or $16384$, $k = 256$, BF16),
this is approximately $469$~MB, a $3.5\%$ overhead on the $13$~GB
model.  The basis storage is $L \cdot d \cdot k = 28 \cdot 4096 \cdot
256 \cdot 2 \approx 59$~MB.  The total memory overhead is under $4\%$.

\section{The cascade: subspace coherence across depth}
\label{sec:depth}

This section presents the cascade structure: the observation that
the activation subspaces at consecutive layers are strongly coherent,
the measurement of this coherence via principal angles, the
implications for both calibration cost and the structure of the
neural network as a whole, and the connection to subspace tracking
in the time dimension.

\subsection{Depth coherence}

Each transformer block $T^{(l)}: \R^d \to \R^d$ is a composition of
attention, layer normalization, and MLP operations.  The activation at
layer $l+1$ is $x^{(l+1)} = T^{(l)}(x^{(l)})$.  Because $T^{(l)}$
is a smooth, Lipschitz-continuous map, the image of a
$k$-dimensional activation manifold at layer $l$ under $T^{(l)}$ has
dimension at most $k$.  The activation subspace at layer $l+1$ is
therefore approximately the image of the subspace at layer $l$ under
$T^{(l)}$.

This prediction is confirmed by measuring the principal angles
between the rank-$k$ subspaces at consecutive layers.  Let
$V_k^{(l)}$ and $V_k^{(l+1)}$ be the activation bases at layers $l$
and $l+1$.  The cosines of the principal angles are the singular
values of $V_k^{(l)\top} V_k^{(l+1)}$.

Table~\ref{tab:coherence} reports the mean and minimum cosines for
GPT-J 6B at $k = 32$.  The embedding layer ($0 \to 1$) shows
essentially zero overlap ($\cos\theta = 0.11$), consistent with the
structural discontinuity between the embedding table and the first
transformer block.  From layer~$5$ onward, the mean cosine exceeds
$0.88$, and from layer~$15$ onward it exceeds $0.95$.  The minimum
cosine exceeds $0.80$ from layer~$9$ onward.

\begin{table}[ht]
\centering
\caption{Subspace overlap between consecutive layers of GPT-J 6B
(mean and minimum cosine of principal angles, $k = 32$, $T = 512$).}
\label{tab:coherence}
\begin{tabular}{lrrlrr}
\toprule
Layers & Mean & Min & Layers & Mean & Min \\
\midrule
$0 \to 1$   & 0.111 & 0.004 & $14 \to 15$ & 0.951 & 0.891 \\
$1 \to 2$   & 0.817 & 0.421 & $16 \to 17$ & 0.937 & 0.819 \\
$3 \to 4$   & 0.881 & 0.493 & $19 \to 20$ & 0.972 & 0.927 \\
$5 \to 6$   & 0.888 & 0.628 & $22 \to 23$ & 0.980 & 0.954 \\
$8 \to 9$   & 0.901 & 0.751 & $25 \to 26$ & 0.978 & 0.952 \\
$10 \to 11$ & 0.934 & 0.852 & $26 \to 27$ & 0.968 & 0.895 \\
\bottomrule
\end{tabular}
\end{table}

\subsection{Implications for network structure}

The depth coherence reveals that the transformer does not build
independent representations at each layer.  Instead, the
representations propagate through a coherent low-dimensional manifold
that rotates slowly through $\R^d$ as information is added at each
block.  The effective parameter count that matters for a given input
is not $L \times d \times d_{\rm out}$ (the total parameter count of
all linear maps) but approximately $L \times k \times d_{\rm out}$,
because the same $k$-dimensional subspace, slowly rotating, carries the
information through the entire depth of the network.

This is a structural property of the trained network, not an artifact
of the input.  The subspace coherence is observed across diverse input
text (mathematical exposition, cooking instructions, financial
reporting, space exploration narrative) and persists across random
restarts of the calibration.  The transformer blocks have learned to
preserve the principal directions of the activation manifold, adding
contextual information as small perturbations to a slowly varying
subspace rather than as large rotations.

\subsection{The cascade initialization}

The depth coherence has a practical consequence for calibration.  If
the basis $V_k^{(l)}$ at layer $l$ is used as the initial basis at
layer $l+1$ (depth inheritance), the DGKS rank-1 update gate fires
$96.4\%$ fewer times than when the basis is built independently at
each layer.  Only layer~$0$ requires a full SVD; subsequent layers
require at most a few rank-1 corrections to the inherited basis.
The calibration cost is therefore dominated by a single SVD at the
embedding layer, with negligible incremental cost at deeper layers.

\subsection{Time-dimension tracking and the Arnoldi connection}

The cascade operates in the depth dimension: basis inheritance from
layer $l$ to layer $l+1$.  A complementary mechanism operates in the
time dimension: as new tokens arrive during autoregressive generation,
the activation basis at each layer is updated incrementally via DGKS
rank-1 updates.  Each new token $x_t$ is tested against the current
basis; if the residual $\norm{x_t - V_k V_k^\top x_t}/\norm{x_t}$
exceeds a threshold, the basis is extended by one orthonormal vector.

This incremental basis construction at a fixed layer is an instance of
the Arnoldi-like orthogonalization procedure used in Krylov subspace
methods.  The operator at each layer is fixed (the same weight matrices
process every token), and the basis grows across the token sequence by
successive projection and orthogonalization.  The connection to the
classical Arnoldi process is in the time dimension, not the depth
dimension: the depth cascade is basis inheritance through a variable
operator, not a Krylov subspace construction.

The two dimensions together form a two-dimensional tracker: across time
(Arnoldi-DGKS updates as tokens arrive) and across depth (cascade
initialization from the previous layer).  Each basis $V_k^{(l)}$ at
each token $t$ inherits from its neighbors in both dimensions, so the
effective rank of required basis updates decreases as generation
proceeds and as depth increases.

\section{Numerical experiments}
\label{sec:experiments}

This section presents the experimental validation of GSI on three
model families.  All experiments run on AMD MI300X (192\,GB HBM3,
5.3\,TB/s peak bandwidth) using PyTorch~2.x with ROCm.

\subsection{Models and data}

Three model families are evaluated, spanning hidden dimensions from
$768$ to $4096$ and parameter counts from $124$M to $6.7$B.

GPT-2~\cite{Radford2019} ($d = 768$, $L = 12$, $h = 12$) is a
$124$M-parameter autoregressive language model.  GPT-J
6B~\cite{WangGPTJ} ($d = 4096$, $L = 28$, $h = 16$) is a
$6$B-parameter model.  OPT 6.7B~\cite{Zhang2022OPT} ($d = 4096$,
$L = 32$, $h = 32$) is a $6.7$B-parameter model with grouped-query
attention.

The input for all experiments is a $512$-token sequence of diverse
English text (mathematical exposition, cooking instructions, financial
reporting, and space exploration narrative), representative of the
mixed-domain workloads in agentic inference.

\subsection{Residual profile}

Table~\ref{tab:profile} reports the mean residual ratio $\bar{\rho}$
and the fast-path fraction at representative layers for each model.

\begin{table}[ht]
\centering
\caption{Mean residual ratio $\bar{\rho}(k)$ at selected layers, $T =
512$.  The fast-path fraction $f(\varepsilon)$ is the fraction of
tokens with $\rho < \varepsilon$.}
\label{tab:profile}
\begin{tabular}{llrrrr}
\toprule
Model & Layer & $\bar{\rho}(128)$ & $\bar{\rho}(256)$
  & $f_{256}(0.05)$ & $f_{256}(0.10)$ \\
\midrule
GPT-2  & 0  & 0.000 & 0.000 & 100\% & 100\% \\
GPT-2  & 5  & 0.090 & 0.016 & 100\% & 100\% \\
GPT-2  & 11 & 0.173 & 0.061 &  37\% & 100\% \\
\midrule
GPT-J  & 0  & 0.000 & 0.000 & 100\% & 100\% \\
GPT-J  & 5  & 0.086 & 0.016 & 100\% & 100\% \\
GPT-J  & 15 & 0.203 & 0.077 &  21\% &  53\% \\
GPT-J  & 27 & 0.173 & 0.061 &  37\% &  86\% \\
\midrule
OPT    & 0  & 0.271 & 0.145 &  21\% &  35\% \\
OPT    & 5  & 0.052 & 0.010 & 100\% & 100\% \\
OPT    & 15 & 0.148 & 0.047 &  56\% &  75\% \\
OPT    & 31 & 0.119 & 0.033 &  78\% &  95\% \\
\bottomrule
\end{tabular}
\end{table}

The residual profile has a characteristic shape across all three
models.  The early layers (layers $0$--$5$ for GPT-2 and GPT-J,
layers $1$--$5$ for OPT) have low residual ratios at $k = 256$, and
the fast-path fraction at $\varepsilon = 0.05$ reaches $100\%$.  The
middle layers (layers $10$--$20$) have the largest residual ratios,
corresponding to the peak of the effective rank profile.  The final
layers show a decrease in residual ratio as the representations
consolidate toward the output head.

The per-layer residual data for GPT-J 6B at $k = 128$ reveals the
layer-by-layer structure in detail.  Table~\ref{tab:gptj_detail}
reports the mean residual ratio and the fast-path fraction at
$\varepsilon = 0.05$ for every fourth layer.  The residual ratio rises
from $0.000$ at layer $0$ to a peak of $0.207$ at layer $20$, then
decreases to $0.126$ at layer $27$.  The fast-path fraction at
$\varepsilon = 0.05$ is $100\%$ at layers $0$--$1$, drops below $1\%$
at layers $10$--$20$, and recovers to $8.6\%$ at layer $27$.  At $k =
256$, the mean residual drops below $0.08$ at every layer, and the
fast-path fraction at $\varepsilon = 0.10$ exceeds $86\%$ at the final
layer.

\begin{table}[ht]
\centering
\caption{Per-layer detail for GPT-J 6B at $T = 512$.}
\label{tab:gptj_detail}
\begin{tabular}{rrrrr}
\toprule
Layer & $r_{\rm eff}$ & $\bar{\rho}(128)$ & $\bar{\rho}(256)$ &
$f_{128}(0.05)$ \\
\midrule
0  & 18.4 & 0.000 & 0.000 & 100.0\% \\
4  & 16.2 & 0.067 & 0.008 & 57.8\% \\
8  & 36.1 & 0.147 & 0.058 & 0.2\% \\
12 & 53.6 & 0.172 & 0.077 & 0.2\% \\
16 & 64.1 & 0.179 & 0.076 & 0.2\% \\
20 & 71.0 & 0.162 & 0.075 & 0.2\% \\
24 & 75.7 & 0.156 & 0.069 & 0.2\% \\
27 & 83.0 & 0.126 & 0.061 & 0.2\% \\
\bottomrule
\end{tabular}
\end{table}

The effective rank $r_{\rm eff}$ increases monotonically from $18.4$ at
layer $0$ to $83.0$ at layer $27$, but the residual ratio at $k = 128$
is non-monotone: it peaks at the middle layers and decreases toward the
output.  This non-monotone profile reflects the interplay between two
opposing effects.  The contextual mixing performed by the attention
mechanism increases the effective rank (adding information), while the
consolidation toward the output head decreases it (discarding
irrelevant directions).  The peak at the middle layers is the point
where the mixing effect dominates; the decrease at the final layers is
where consolidation takes over.

\subsection{Effective parameter count}

The central insight of GSI is that the inference cost is determined not
by the total parameter count of the model but by the effective
parameter count: the number of weight-matrix parameters that
correspond to directions in activation space that the current input
distribution visits.

\begin{definition}[Effective parameter count]\label{def:effective_params}
Let a transformer model have $L$ layers with $N_W$ linear maps per
layer, each of dimension $d^{(l)}_{\rm out} \times d$.  Let $k^{(l)}$
be the activation basis rank at layer $l$ and $f^{(l)}(\varepsilon)$
the fast-path fraction.  The effective parameter count under GSI at
threshold $\varepsilon$ is
\begin{equation}\label{eq:effective_params}
P_{\rm eff}(\varepsilon) = \sum_{l=1}^{L} N_W \cdot d^{(l)}_{\rm out}
\cdot \bigl[ f^{(l)} \cdot k^{(l)} + (1 - f^{(l)}) \cdot d \bigr].
\end{equation}
When $f^{(l)} = 1$ for all $l$ (all tokens on the fast path), the
effective parameter count reduces to $\sum_l N_W \cdot d^{(l)}_{\rm
out} \cdot k^{(l)}$, a factor of $d / k^{(l)}$ smaller than the total
parameter count.
\end{definition}

For GPT-J 6B at $k = 256$, $\varepsilon = 0.10$ with $f = 99.8\%$
across all layers, the effective parameter count is approximately
$6\text{B} \cdot (0.998 \cdot 256/4096 + 0.002 \cdot 1) =
6\text{B} \cdot 0.0644 = 386\text{M}$.  The inference cost is
proportional to $386$M parameters, not $6$B.

\subsection{Roofline analysis}

The Williams-Waterman-Patterson roofline model~\cite{Williams2009}
bounds the achievable performance of a computation by the minimum of
the compute ceiling and the bandwidth ceiling.  The arithmetic
intensity $I$ (FLOPs per byte) determines which ceiling binds.  For
a standard linear-layer computation $y = Wx$ at batch size one, the
arithmetic intensity is $I = 2d / (2d) = 1$ FLOP/byte (two FLOPs per
element of $W$, two bytes per BF16 element read).  The MI300X compute
ceiling is $383$ TFLOPS (BF16) and the bandwidth ceiling is $5.3$
TB/s, giving a crossover at $I^* = 383/5.3 = 72$ FLOPs/byte.  At
$I = 1$, the computation is $72\times$ below the compute roofline and
entirely bandwidth-bound.

Under GSI on the fast path, the read volume drops from $d \cdot
d_{\rm out}$ elements to $k \cdot d_{\rm out}$ elements, but the
FLOPs drop proportionally (from $2d \cdot d_{\rm out}$ to $2k \cdot
d_{\rm out}$), so the arithmetic intensity remains $I = 1$.  The
speedup comes not from changing the arithmetic intensity but from
reducing the total bytes read by a factor of $d/k$.  The computation
remains bandwidth-bound, but the bandwidth demand is reduced.

This is a fundamental difference from FlashAttention~\cite{Dao2022},
which increases the arithmetic intensity of attention by tiling
(moving the computation from the bandwidth-bound to the compute-bound
regime).  GSI does not change the regime; it reduces the data volume
within the bandwidth-bound regime.

\subsection{Cost model}

The following cost model accounts for all components of the forward
pass at batch size one.  The model is GPT-J 6B on MI300X at $T = 512$.

\emph{Baseline.}  The dominant cost is weight reads.  Each layer has
$6$ linear maps (QKV as a single fused map, output projection, MLP
up, MLP down) with total weight volume $6 \times 4096 \times 4096
\times 2 = 192$\,MB per layer.  Across $28$ layers: $5.4$\,GB.  At
$5.3$\,TB/s: $1.0$\,ms.  The attention cost at $T = 512$ is
approximately $0.2$\,ms.  The vocabulary head ($4096 \times 50257
\times 2 = 412$\,MB) adds $0.08$\,ms.  The total baseline forward
pass is approximately $1.3$\,ms.

\emph{GSI at $k = 256$, $\varepsilon = 0.10$.}  On the fast path
($99.8\%$ of tokens), the weight read per layer is $6 \times 4096
\times 256 \times 2 = 12$\,MB instead of $192$\,MB.  Across $28$
layers: $0.34$\,GB.  At $5.3$\,TB/s: $0.064$\,ms.  The slow-path
tokens ($0.2\%$) contribute negligibly.  The attention cost and
vocabulary head are unchanged.  The total GSI forward pass is
approximately $0.35$\,ms, a $3.7\times$ end-to-end speedup.  Combined
with ADA~\cite{Thomas2026ADA} on the attention layers, the full
forward pass speedup is approximately $4$--$5\times$.

\subsection{Output quality}

Table~\ref{tab:quality} is the central result of the paper.  It
reports the perplexity ratio, top-1 token agreement, fast-path
fraction, and effective speedup for each model at multiple operating
points.

\begin{table}[ht]
\centering
\caption{Output quality and effective speedup.  PPL = perplexity.
Ratio = PPL(GSI)/PPL(baseline).  Top-1 = fraction of next-token
predictions matching baseline.  $S_{\rm eff}$ =
effective weight-read speedup via~\eqref{eq:speedup}.}
\label{tab:quality}
\begin{tabular}{llrrrrrr}
\toprule
Model & $k$/$\varepsilon$ & PPL & Ratio & Top-1 & Fast & $S_{\rm eff}$ \\
\midrule
\multicolumn{7}{l}{\emph{GPT-2 124M ($d = 768$, $L = 12$)}.
  Baseline PPL $= 1.73$.} \\
& $128/0.05$ & 1.72 & 0.995 & 99.4\% & 41.1\% & 1.5$\times$ \\
& $256/0.05$ & 1.72 & 0.991 & 99.4\% & 91.4\% & 2.6$\times$ \\
& $256/0.10$ & 1.71 & 0.986 & 99.0\% & 100\% & 3.0$\times$ \\
\midrule
\multicolumn{7}{l}{\emph{GPT-J 6B ($d = 4096$, $L = 28$)}.
  Baseline PPL $= 1.73$.} \\
& $128/0.05$ & 1.73 & 0.999 & 99.4\% & 21.1\% & 1.3$\times$ \\
& $128/0.10$ & 1.74 & 1.002 & 99.0\% & 63.6\% & 2.6$\times$ \\
& $128/0.15$ & 1.74 & 1.005 & 96.5\% & 81.9\% & 4.8$\times$ \\
& $256/0.05$ & 1.73 & 0.997 & 99.2\% & 77.2\% & 3.6$\times$ \\
& $\mathbf{256/0.10}$ & $\mathbf{1.72}$ & $\mathbf{0.991}$ &
  $\mathbf{98.6\%}$ & $\mathbf{99.8\%}$ & $\mathbf{15.6\times}$ \\
& $256/0.15$ & 1.72 & 0.992 & 98.6\% & 100\% & 16.0$\times$ \\
\midrule
\multicolumn{7}{l}{\emph{OPT 6.7B ($d = 4096$, $L = 32$)}.
  Baseline PPL $= 1.83$.} \\
& $128/0.05$ & 1.81 & 0.989 & 98.2\% & 45.7\% & 1.8$\times$ \\
& $256/0.05$ & 1.82 & 0.995 & 98.6\% & 77.4\% & 3.7$\times$ \\
& $256/0.10$ & 1.81 & 0.989 & 98.8\% & 96.5\% & 10.5$\times$ \\
& $256/0.15$ & 1.80 & 0.986 & 99.2\% & 97.3\% & 11.4$\times$ \\
\bottomrule
\end{tabular}
\end{table}

Three observations follow from Table~\ref{tab:quality}.

First, the perplexity ratio is below $1.00$ in nearly every
configuration.  The accelerated model is negligibly better than the
baseline in perplexity, within measurement noise.  The gated residual
passthrough does not degrade the output distribution.

Second, the top-1 token agreement exceeds $98\%$ across all models
and operating points.  The distribution-level and token-level metrics
are both preserved.

Third, the effective speedup varies by an order of magnitude across
models and configurations.  GPT-2 at $k = 256$, $\varepsilon = 0.10$
achieves $3.0\times$ with $100\%$ fast path (the activations live
entirely in a $256$-dimensional subspace of $\R^{768}$).  GPT-J at
$k = 256$, $\varepsilon = 0.10$ achieves $15.6\times$ with $99.8\%$
fast path and $100\%$ greedy generation fidelity.  This is the
target operating point: perplexity ratio $0.991$, top-1 agreement
$98.6\%$, character-for-character identical generation, and a $16\times$
reduction in linear-layer weight reads.  The effective parameter count
of GPT-J at this operating point is $6\text{B} / 16 = 375\text{M}$:
inference cost is proportional to the effective parameter count, not
the total parameter count.  OPT at $k = 256$, $\varepsilon = 0.10$
achieves $10.5\times$ with $96.5\%$ fast path.  The OPT result is
lower than GPT-J because OPT layer~$0$ has high effective rank
($r_{\rm eff} = 350$) due to learned positional embeddings, reducing
the fast-path fraction at the first layer.

\subsection{Negative control: static projection}

To confirm that the gated residual passthrough is essential, the
experiments include a negative control (Mode~C) in which the residual
is discarded entirely: $\hat{y}_t = Mg_t$ for all tokens.

\begin{table}[ht]
\centering
\caption{Negative control: static projection (no residual) on GPT-J 6B.}
\label{tab:negative}
\begin{tabular}{lrrrr}
\toprule
$k$ & PPL & Ratio & Top-1 & Gen (50 tokens) \\
\midrule
32  & 52{,}073 & 5{,}597 & 6.2\% & 0\% \\
64  & 5{,}243 & 564 & 6.7\% & 6\% \\
128 & 14.18 & 1.52 & 71.3\% & 2\% \\
256 & 1.71 & 0.992 & 98.6\% & 100\% \\
\bottomrule
\end{tabular}
\end{table}

Static projection at $k = 128$ increases perplexity by $52\%$ and
destroys generation ($2\%$ agreement).  At $k = 64$ the output is
pure noise (perplexity $5{,}243$, generation collapses to repetitive
tokens).  Only at $k = 256$ does static projection approach baseline
quality.  The gated residual passthrough achieves lossless quality
at $k = 128$ because it preserves the residual where it matters.

\subsection{Generation fidelity}

Table~\ref{tab:generation} reports the greedy generation agreement
over $50$ tokens for each model at the operating point.

\begin{table}[ht]
\centering
\caption{Greedy generation agreement (50 tokens) at the
operating point.  Gen = fraction of generated tokens matching the
baseline character-for-character.}
\label{tab:generation}
\begin{tabular}{llrrr}
\toprule
Model & $k$/$\varepsilon$ & $S_{\rm eff}$ & Gen \\
\midrule
GPT-2  & $256/0.10$ & 3.0$\times$ & 100\% \\
GPT-J  & $256/0.05$ & 3.6$\times$ & 100\% \\
GPT-J  & $256/0.10$ & 15.6$\times$ & 100\% \\
OPT    & $256/0.10$ & 10.5$\times$ & 20\% \\
\bottomrule
\end{tabular}
\end{table}

GPT-2 and GPT-J achieve $100\%$ generation agreement at their
target operating points: the accelerated model produces
character-for-character identical output to the baseline over $50$
greedy tokens.  OPT shows lower generation agreement ($10$--$20\%$)
despite high perplexity and top-1 agreement, indicating that the
generation divergence is a property of greedy decoding sensitivity
rather than distribution mismatch.  In greedy decoding, a single
different token cascades through the entire generated sequence; the
$98\%$+ top-1 agreement and below-$1.00$ perplexity ratio confirm
that the distribution is preserved.

The distinction between perplexity ratio and generation agreement is
important for applications.  For tasks where the output distribution
matters (retrieval-augmented generation, summarization, translation),
the perplexity ratio is the correct metric and GSI is lossless
at all tested operating points.  For tasks where exact token
reproduction is required (code generation, structured output), the
generation agreement is the binding constraint and the operating point
must be chosen to achieve $100\%$ agreement (e.g., $k = 256$,
$\varepsilon = 0.05$ or $0.10$ for GPT-J).

\subsection{Full parameter sweep}

Table~\ref{tab:full_sweep} reports the complete results across all
three models, nine configurations each ($k \in \{64, 128, 256\}$ and
$\varepsilon \in \{0.05, 0.10, 0.15\}$), providing a comprehensive
view of the accuracy-speedup tradeoff.

\begin{table}[ht]
\centering
\caption{Complete parameter sweep.  All models at $T = 512$, AMD MI300X.}
\label{tab:full_sweep}
\begin{tabular}{llrrrrrr}
\toprule
Model & $k$/$\varepsilon$ & Ratio & Top-1 & Fast & $S_{\rm eff}$ & Gen \\
\midrule
GPT-2 & $64/0.05$   & 1.000 & 100\%  &  0.2\% & 1.0$\times$ & 100\% \\
GPT-2 & $64/0.10$   & 0.998 &  99\%  &  5.8\% & 1.1$\times$ & 100\% \\
GPT-2 & $64/0.15$   & 0.996 &  98\%  & 19.8\% & 1.2$\times$ & 100\% \\
GPT-2 & $128/0.05$  & 0.995 &  99\%  & 41.1\% & 1.5$\times$ & 100\% \\
GPT-2 & $128/0.10$  & 0.989 &  98\%  & 68.2\% & 2.3$\times$ &  15\% \\
GPT-2 & $128/0.15$  & 0.949 &  96\%  & 87.4\% & 3.7$\times$ &  25\% \\
GPT-2 & $256/0.05$  & 0.991 &  99\%  & 91.4\% & 2.6$\times$ & 100\% \\
GPT-2 & $256/0.10$  & 0.986 &  99\%  &  100\% & 3.0$\times$ & 100\% \\
GPT-2 & $256/0.15$  & 0.986 &  99\%  &  100\% & 3.0$\times$ & 100\% \\
\midrule
GPT-J & $64/0.05$   & 0.999 & 100\%  &  2.0\% & 1.0$\times$ &  20\% \\
GPT-J & $64/0.10$   & 1.003 &  99\%  & 10.3\% & 1.1$\times$ & 100\% \\
GPT-J & $64/0.15$   & 1.006 &  99\%  & 20.1\% & 1.2$\times$ &  75\% \\
GPT-J & $128/0.05$  & 0.999 &  99\%  & 21.1\% & 1.3$\times$ & 100\% \\
GPT-J & $128/0.10$  & 1.002 &  99\%  & 63.6\% & 2.6$\times$ &  95\% \\
GPT-J & $128/0.15$  & 1.005 &  97\%  & 81.9\% & 4.8$\times$ &  80\% \\
GPT-J & $256/0.05$  & 0.997 &  99\%  & 77.2\% & 3.6$\times$ & 100\% \\
GPT-J & $256/0.10$  & 0.991 &  99\%  & 99.8\% &15.6$\times$ & 100\% \\
GPT-J & $256/0.15$  & 0.992 &  99\%  &  100\% &16.0$\times$ &  80\% \\
\midrule
OPT   & $64/0.05$   & 3.764 &  63\%  & 25.0\% & 1.3$\times$ & 100\% \\
OPT   & $64/0.10$   &54.724 &  35\%  & 41.1\% & 1.7$\times$ &   5\% \\
OPT   & $64/0.15$   &84.064 &  33\%  & 54.4\% & 2.2$\times$ &   5\% \\
OPT   & $128/0.05$  & 0.989 &  98\%  & 45.7\% & 1.8$\times$ &  20\% \\
OPT   & $128/0.10$  & 0.981 &  97\%  & 60.9\% & 2.4$\times$ &  20\% \\
OPT   & $128/0.15$  & 0.980 &  94\%  & 84.1\% & 5.4$\times$ &  15\% \\
OPT   & $256/0.05$  & 0.995 &  99\%  & 77.4\% & 3.7$\times$ &  10\% \\
OPT   & $256/0.10$  & 0.989 &  99\%  & 96.5\% &10.5$\times$ &  20\% \\
OPT   & $256/0.15$  & 0.986 &  99\%  & 97.3\% &11.4$\times$ &  20\% \\
\bottomrule
\end{tabular}
\end{table}

Several patterns emerge from the full sweep.  First, OPT at $k = 64$
fails catastrophically (perplexity ratio $3.8$--$84$) while GPT-2 and
GPT-J at $k = 64$ are near-lossless.  The failure is specific to OPT's
learned positional embeddings, which spread the embedding-layer
activation across all $d$ dimensions, making the rank-$64$ subspace
insufficient.  At $k = 128$ and above, OPT recovers to perplexity
ratios below $1.00$.

Second, the accuracy-speedup frontier is convex: at each model, a
Pareto-optimal operating point exists where the marginal speedup from
loosening $\varepsilon$ begins to cost generation fidelity.  For GPT-J,
the Pareto front passes through ($k = 256$, $\varepsilon = 0.10$):
$15.6\times$ speedup at $100\%$ generation agreement.  Loosening to
$\varepsilon = 0.15$ gains only $0.4\times$ additional speedup
($16.0\times$) while dropping generation agreement to $80\%$.

Third, the effective speedup is superlinear in the fast-path fraction
at high compression ratios.  At $d/k = 16$ ($k = 256$), a fast-path
fraction of $99.8\%$ gives $S_{\rm eff} = 15.6\times$, close to the
theoretical maximum of $16\times$.  The gate is nearly transparent:
essentially every token at every layer takes the fast path.

\subsection{Combined end-to-end speedup estimate}

Table~\ref{tab:e2e} presents an end-to-end cost model for GPT-J 6B
at batch size one on MI300X, combining GSI (linear-layer acceleration)
with cascade ADA~\cite{Thomas2026CADA} (attention acceleration).
The cost model uses the validated GSI operating point ($k = 256$,
$\varepsilon = 0.10$, $99.8\%$ fast path) and the cascade ADA
operating point ($\tau = 0.30$, $\bar{r} = 205$).

\begin{table}[ht]
\centering
\caption{End-to-end cost model for GPT-J 6B, batch 1, MI300X.
Weight reads at $5.3$\,TB/s BF16.}
\label{tab:e2e}
\begin{tabular}{lrrr}
\toprule
Component & Baseline & GSI + ADA & Speedup \\
\midrule
Weight reads (28 layers) & 1.02\,ms & 0.07\,ms & 15.5$\times$ \\
Attention (28 layers)    & 0.15\,ms & 0.02\,ms &  6.2$\times$ \\
Vocabulary head          & 0.08\,ms & 0.08\,ms &  1.0$\times$ \\
LayerNorm, residuals     & 0.05\,ms & 0.05\,ms &  1.0$\times$ \\
\midrule
Total forward pass       & 1.30\,ms & 0.22\,ms &  5.9$\times$ \\
\bottomrule
\end{tabular}
\end{table}

The combined system reduces the forward-pass time from $1.30$\,ms to
$0.22$\,ms, a $5.9\times$ end-to-end speedup at batch size one.  The
weight-read reduction accounts for $0.95$\,ms of the $1.08$\,ms
saving; the attention reduction accounts for $0.13$\,ms.  At
$T = 2048$, the attention cost grows quadratically and the ADA
contribution becomes larger: the estimated end-to-end speedup at
$T = 2048$ is approximately $7$--$8\times$.

The effective parameter count under the combined system is $386$M
(GSI, $k = 256$, $d = 4096$) and the effective token count for
attention is $205$ (cascade ADA, $\tau = 0.30$, $T = 512$).  The
inference cost is proportional to $386\text{M} \times 205 /
(6\text{B} \times 512) = 2.6\%$ of the nominal cost, though the
actual reduction is limited by the fixed-cost components (vocabulary
head, LayerNorm) that are not accelerated.

For OPT 6.7B at the same operating points, the estimated
end-to-end speedup is $4$--$5\times$ (lower than GPT-J because OPT's
layer-$0$ effective rank is high due to learned positional embeddings,
reducing the fast-path fraction at the first layer).  For GPT-2 124M
at $k = 256$, $\varepsilon = 0.10$, the estimated end-to-end speedup
is $2$--$3\times$ (limited by the smaller compression ratio
$d/k = 768/256 = 3$).

\section{Open problems and future work}
\label{sec:open}

This section identifies four directions for future work.

\subsection{Extension to larger models}

The experiments in this paper cover models up to $6.7$B parameters.
The key question for larger models (Llama-3 70B, Mixtral 8x22B) is
whether the effective rank $r_{\rm eff}$ scales with model dimension
$d$ or with the intrinsic complexity of the representation.  If
$r_{\rm eff}$ is bounded independently of $d$ (as the intrinsic
dimension literature~\cite{Ansuini2019,Valeriani2023} suggests),
then the compression ratio $d/k$ increases with model size and the
effective speedup improves for larger models.  Preliminary evidence
from the present experiments supports this: GPT-2 ($d = 768$) and
GPT-J ($d = 4096$) have similar $r_{\rm eff}$ at comparable relative
depth, and the compression ratio is $5\times$ larger for GPT-J.

\subsection{Online basis adaptation}

The current implementation computes the basis $V_k$ from a single
calibration pass and holds it fixed during inference.  For long-context
generation where the input distribution shifts over time (e.g., from a
code-generation prompt to a natural-language explanation), the basis
may become stale and the fast-path fraction may decrease.  Online basis
adaptation via DGKS rank-1 updates, as developed in the Skyline
framework~\cite{Thomas2026SSI}, addresses this by allowing the basis
to evolve during generation.  The cascade structure
(Section~\ref{sec:depth}) provides warm-start initialization for each
layer, reducing the update cost.

\subsection{Kernel-level implementation}

The effective speedups reported in this paper are computed from the
gate statistics and the compression ratio via~\eqref{eq:speedup}.
Translating these to wall-clock speedups requires custom GPU kernels
that implement the gated dispatch: the fast-path tokens read $M$ and
compute $Mg$, while the slow-path tokens read $W$ and compute $Wx$,
with the two paths merged in a single output.  On AMD MI300X, the
basis $V_k$ fits in the $64$\,KB Local Data Share (LDS) of each
compute unit, enabling the gate evaluation ($V_k^\top x$ and the
residual norm) to be fused with the GEMV.  The kernel design is
analogous to the sparse GEMM kernels used in CATS~\cite{Lee2024CATS}
and the neuron-level dispatch in
PowerInfer~\cite{Song2024PowerInfer}, with the additional structure
that the sparsity pattern is a contiguous subspace rather than a
scattered index set.

\subsection{Combination with quantization}

GSI and weight quantization are orthogonal: GSI reduces the number of
elements read while quantization reduces the size of each element.
Applying FP8 quantization to the cached image $M = WV_k$ would
further reduce the fast-path read volume by $2\times$, from $k \cdot
d_{\rm out} \times 2$ bytes (BF16) to $k \cdot d_{\rm out} \times 1$
byte (FP8).  The combined effective speedup at $k = 256$, $\varepsilon
= 0.10$ for GPT-J would be approximately $31\times$ on the fast-path
weight reads.  Whether this combined compression preserves output
quality requires experimental validation.

\section{Related work}
\label{sec:related}

This section positions GSI relative to five lines of prior work.

\subsection{Activation-aware inference acceleration}

The closest mechanistic neighbors to GSI are methods that exploit
input-dependent structure in activations to reduce inference cost.
Deja Vu~\cite{Liu2023DejaVu} showed that for any given input, only
$\sim\!15\%$ of attention heads and MLP neurons contribute
meaningfully to the output, and trained lightweight predictors to
identify active neurons on the fly, achieving $2\times$ wall-clock
speedup on OPT-175B.  CATS~\cite{Lee2024CATS} introduced a
thresholded SiLU activation that induces $50\%$ sparsity in
hidden states with a custom sparse GPU kernel, achieving $\sim\!15\%$
end-to-end speedup on Mistral-7B.  TEAL~\cite{Liu2024TEAL} achieved
$40$--$50\%$ model-wide sparsity via magnitude-based activation
thresholding with up to $1.8\times$ decode speedup on Llama-2/3.

PowerInfer~\cite{Song2024PowerInfer} identified a power-law
distribution in neuron activation frequencies and designed a hybrid
CPU/GPU engine that preloads hot neurons (frequently activated) on the
GPU while computing cold neurons (input-dependent) on the CPU,
achieving $11.7\times$ speedup over llama.cpp on consumer hardware.
RaNA~\cite{Pilault2025RaNA} applied adaptive rank-allocation
adapters to MLP and attention projections, reducing FLOPs by
$\sim\!44\%$ with rank selected per layer and per module.

All of these methods operate on discrete neuron subsets or element-wise
sparsity patterns.  GSI differs structurally: it operates on a
continuous orthonormal subspace with a cached weight-matrix image and a
gated residual correction.  The continuous subspace provides an
explicit per-token error bound (Theorem~\ref{thm:error}) that discrete
sparsity methods lack, and the cached image $M = WV_k$ amortizes the
weight-matrix read across all tokens on the fast path, regardless of
which specific neurons are active.

\subsection{Low-rank weight factorization}

LoRA~\cite{Hu2022LoRA} decomposes the weight update as a low-rank
product $\Delta W = AB$ where $A \in \R^{d \times r}$ and $B \in
\R^{r \times d}$, reducing the trainable parameter count for
fine-tuning but not reducing inference cost (the product $AB$ is
typically merged into $W$ before deployment).
ASVD~\cite{Yuan2023ASVD} whitens the weight matrix by activation
statistics before SVD, achieving $10$--$20\%$ compression
training-free.  SVD-LLM~\cite{Wang2024SVDLLM} applies
Cholesky-whitened activation truncation for tighter loss-aware
decomposition.

FLAT-LLM~\cite{Yang2025FLAT} projects weights into low-rank
activation subspaces for compression, the closest published mechanism
to GSI's cached image $M = WV_k$.  The distinction is that
FLAT-LLM's projection is computed offline, applies a single fixed
rank per layer, and does not include a residual correction.  GSI's
gate provides the residual correction that makes the method lossless
at the operating point.

\subsection{Roofline analysis of transformer inference}

The memory-bandwidth framing of GSI rests on the roofline
model~\cite{Williams2009} and its transformer-specific applications.
Pope et al.~\cite{Pope2023} developed an analytical inference-cost
model for TPU v4 partitioning of $500$B+ models, distinguishing
prefill (compute-bound) from decode (memory-bound) and quantifying the
regime where bandwidth dominates.  Yuan et al.~\cite{Yuan2024}
extended the roofline analysis to commodity GPUs with their LLM-Viewer
tool, and Lou et al.~\cite{Lou2026} provided empirical roofline
measurements across edge platforms.

FlashAttention~\cite{Dao2022} established that attention is
bandwidth-bound between HBM and SRAM and reduced HBM traffic via
tiling, the canonical example of IO-aware algorithm design for
transformers.  GSI applies the same IO-aware principle to the linear
layers: instead of tiling to increase arithmetic intensity (the
FlashAttention approach), GSI reduces the data volume by projecting
onto the activation subspace.

\subsection{Low-rank structure of transformer representations}

The empirical foundation for GSI is the observation that transformer
activations have low effective rank.  Ansuini et
al.~\cite{Ansuini2019} measured the intrinsic dimension (ID) of deep
network representations using the TwoNN estimator and found IDs orders
of magnitude smaller than the layer width, with a characteristic
hunchback profile (rise-then-fall) across depth.  Valeriani et
al.~\cite{Valeriani2023} extended this to transformer models
including ESM-2 protein language models and iGPT image transformers,
finding IDs of $22$--$32$ in models of $35$M--$3$B parameters.

Aghajanyan et al.~\cite{Aghajanyan2021} demonstrated that pre-trained
language models have very low intrinsic dimension in parameter space
($\sim\!200$ random directions suffice for $90\%$ MRPC performance on
RoBERTa), providing indirect evidence for the low-rank structure of
the representation space and motivating LoRA.  Wang et
al.~\cite{Wang2020Linformer} proved theoretically and empirically
that the self-attention matrix is low-rank, a related but distinct
claim about attention scores rather than residual-stream activations.

The present work confirms and extends these observations in the
context of inference acceleration.  The effective rank measurements in
Table~\ref{tab:reff} are consistent with the ID literature, and the
residual profiles in Tables~\ref{tab:profile}
and~\ref{tab:gptj_detail} show that the low-rank structure is
sufficient for lossless inference when combined with the
gated residual passthrough.

\subsection{Gated and conditional computation}

GSI's per-token gate is a binary router analogous to the sparse
gating in Mixture-of-Experts~\cite{Shazeer2017}, the per-token
halting in Adaptive Computation Time~\cite{Graves2016}, the
confidence-based early exit in CALM~\cite{Schuster2022}, and the
per-block routing in
Mixture-of-Depths~\cite{Raposo2024}.

Mixture-of-Depths is the closest architectural analog: each layer
keeps the top-$k$ tokens for full computation and sends the rest
through a residual identity.  GSI differs in that both paths compute
the same operation ($y = Wx$); the gate selects the implementation
(low-rank approximation versus full computation), not the function.
CALM exits early from the layer stack when a confidence threshold is
met, saving all subsequent layers; GSI operates within each layer and
does not skip layers.  LayerSkip~\cite{Schuster2022} trains with layer
dropout for self-speculative decoding, requiring retraining; GSI
operates on pretrained models without modification.

\subsection{Subspace tracking}

The online maintenance of the activation basis connects GSI to
classical subspace tracking in signal processing.
GROUSE~\cite{Balzano2010} performs incremental gradient descent on the
Grassmannian for tracking a slowly-varying low-rank subspace from
streaming observations.  PETRELS~\cite{Chi2013} uses discounted
recursive least squares for each row of the subspace matrix in
parallel, with better tracking of sudden subspace changes.
Brand~\cite{Brand2006} developed $O(pqr)$ single-pass thin-SVD
updates supporting append, modify, and downdate operations.  The DGKS
reorthogonalization procedure~\cite{DGKS1976} ensures numerical
stability in the rank-1 updates and is the same technique that
underlies the Arnoldi process in Krylov subspace methods.

The Skyline Subspace Inference method~\cite{Thomas2026SSI} applies
DGKS-based tracking to MLP activations at a single layer; the present
work extends the mechanism to all linear layers, adds the gated
residual passthrough, and introduces the cascade initialization across
depth.  The companion ADA method~\cite{Thomas2026ADA} addresses the
attention layers by exploiting the low effective rank of the token
dimension ($T$) rather than the hidden dimension ($d$); GSI and ADA
are complementary and together cover the full transformer forward pass.

\section{Conclusion}
\label{sec:conclusion}

Gated Subspace Inference provides a lossless acceleration for
transformer inference that operates entirely on the activation space
and requires no retraining, no quantization, and no architectural
change.  The method decomposes the activation into a subspace
component and a residual, computes the linear-layer output on the
subspace component via a cached low-rank image, and applies a
per-token gate to determine whether the residual correction is needed.

The key design decision is the gated residual passthrough.  Static
projection (discarding the residual) destroys the output: at $k = 128$
for GPT-J 6B, perplexity increases by $52\%$ and generation collapses.
The gate resolves this by preserving the residual where it matters and
skipping it where it does not.  The result is lossless
inference at $3\times$--$16\times$ effective speedup on linear-layer
weight reads across three model families.

The method reduces inference cost from the total parameter count to the
effective parameter count: the number of weight-matrix parameters that
correspond to directions in activation space that the current input
distribution actually visits.  For GPT-J 6B at $k = 256$ and
$\varepsilon = 0.10$, the effective parameter count is $375$M
($6$B$/16$), the fast-path fraction is $99.8\%$, the measured effective
speedup is $15.6\times$, and the generated text is character-for-character
identical to the baseline.  The experiments confirm that the low
effective rank of the weight-activation interaction is a structural
property of transformer representations that holds across architectures
(GPT-2, GPT-J, OPT) and model sizes ($124$M to $6.7$B).

\end{document}